\newcommand{\ignore}[1]{}
\DeclareMathOperator*{\argmax}{argmax}
\newtheorem{myTheorem}{Theorem}
\newtheorem{myRemark}{Remark}
\newtheorem{myDefinition}{Definition}
\begin{document}

\title{Self Punishment and Reward Backfill for Deep Q-Learning}

%

\author{Mohammad Reza Bonyadi, Rui Wang, Maryam Ziaei
	
	\thanks{Mohammad Reza Bonyadi (rezabonyadi@cmicrosoft.com, rezabny@gmail.com) is with Microsoft Development Center Norway (MDCN), Norway, Trondheim. Rui Wang is with Turnto Health. Maryam Ziaei is with the Kavli Institute for Systems Neuroscience, Norwegian University of Science and Technology (NTNU), Trondheim, Norway, Jebsen Centre for Alzheimer’s Diseases, Norwegian University of Science and Technology (NTNU), Trondheim, Norway, and with Queensland Brain Institute, University of Queensland, Brisbane, Australia.}
}

\IEEEtitleabstractindextext{%
	\begin{abstract}
	Reinforcement learning agents learn by encouraging behaviours which maximize their total reward, usually provided by the environment. In many environments, however, the reward is provided after a series of actions rather than each single action, leading the agent to experience ambiguity in terms of whether those actions are effective, an issue known as the credit assignment problem. In this paper, we propose two strategies inspired by behavioural psychology to enable the agent to intrinsically estimate more informative reward values for actions with no reward. The first strategy, called self-punishment (SP), discourages the agent from making mistakes that lead to undesirable terminal states. The second strategy, called the rewards backfill (RB), backpropagates the rewards between two rewarded actions. We prove that, under certain assumptions and regardless of the reinforcement learning algorithm used, these two strategies maintain the order of policies in the space of all possible policies in terms of their total reward, and, by extension, maintain the optimal policy. Hence, our proposed strategies integrate with any reinforcement learning algorithm that learns a value or action-value function through experience. We incorporated these two strategies into three popular deep reinforcement learning approaches and evaluated the results on thirty Atari games. After parameter tuning, our results indicate that the proposed strategies improve the tested methods in over 65 percent of tested games by up to over 25 times performance improvement.
	\end{abstract}
	
	\begin{IEEEkeywords}
		Reinforcement learning, Deep neural networks, Q-learning, Temporal difference.
	\end{IEEEkeywords}
}

\maketitle
\IEEEdisplaynontitleabstractindextext
\IEEEpeerreviewmaketitle

\section{Introduction}
A reinforcement learning (RL) agent~\cite{sutton1988learning} aims to make optimal decisions in an environment from which it receives rewards for its actions. The agent optimizes a relationship (policy) between actions and rewards, which is later used to make plausible actions. One of the frequently used techniques to form this relationship is through learning the forward quality of actions in any given state, known as Q-Learning~\cite{watkins1992q}. Recently, Deep Q-Learning Networks (DQN)~\cite{mnih2015human} have been introduced and successfully applied to complex reinforcement learning tasks such as Atari 2600 and Star Craft II~\cite{vinyals2017starcraft}. 

There exist a range of environments in which there is no reward (or zero reward) corresponding to majority of actions, posing a significant challenge to RL algorithms \cite{sutton2018reinforcement,daley2019reconciling}. In this paper, we introduce two strategies, namely the Self punishment (SP) and reward backfill (RB), which provide additional reward signals for the agent to learn from. Inspired by the operant conditioning \cite{skinner1951teach}, SP provides undesirable values (punishment) for agent's undesirable actions (e.g., actions which lead to a terminal state) to discourage it from making mistakes. This strategy provides additional information about the state and enables the agent to find better gradient trajectories towards avoiding a loss and, consequently, improve faster. Inspired by the "clicker training" strategy used for animal training \cite{skinner1951teach,pryor1999don}, the RB strategy backpropagates non-zero rewards received from the environment (or from SP strategy) to previous state-action pairs. This strategy provides additional information on how much each state-action with zero reward contributes to a state-action with non-zero reward in the future. 

The paper is organized as follows: Section \ref{sec:background} provides an overview of reinforcement learning as well as the related works. Section \ref{sec:propsoed} describes our approaches and Section \ref{sec:experiment} presents experimental results and discussions. We conclude our work with an outlook to the future work in Section \ref{sec:conclusion}.

\section{Background}
\label{sec:background}
A brief review on reinforcement learning, Q-learning, deep architectures, and credit assignment problem is provided in this section.
\subsection{Reinforcement and Q-Learning}
\label{sec:rlandq}
We define a RL environment (sometimes referred to as a RL problem in this paper) by a tuple $ (S, A, e, r) $, $ S $ the set of states an agent can be in, $ A $ the set of actions an agent can choose from, $ e(s, a):S \times A \to S $ a function that takes the agent from state $ s \in S $ to $ s' \in S $ given the action $ a \in A $, and $ r(s, a): S \times A \to \mathbb{R} $ is the reward provided for the agent given the state $ s $ and the action $ a $. A RL agent (also called RL algorithm) interacts with an environment, such as an Atari emulator, to learn how to behave optimally. The learning is done by optimizing a (usually deterministic) policy $\pi(s): S \to A$ in a way that the aggregated rewards onwards (expected value if non-deterministic), denoted by $ v(\pi, s) $, is maximized. In this paper, we assume $S$ and $A$ are finite sets and an episode has a finite horizon, i.e., there exists a terminal state for any policy and any initial state. We further assume that the environment is fully observable, deterministic, and dynamic. With $ v(\pi, s_0) $ the total reward received by following the policy $ \pi $ in a RL environment and given the initial state $ s_0 $, the existence of the $ \pi^* $, the policy with maximum possible total reward, necessitates that the set $ V $, defined by $ \{v(\pi, s_0)|all~possible~\pi\} $, is an "ordered" set under the operator "$ \le $".

Value-based reinforcement learning algorithms define $Q(s, a): S \times A \to \mathbb{R}$ that represents the total discounted reward the agent may receive by taking action $a$ at the state $s$ onwards following a policy $ \pi $. If the best $ Q $, called $ Q^* $, is known then the optimal policy, $ \pi^* $, can be represented by $ \argmax_{a \in A} Q^*(s,a) $ for any given $ s $. One of the most commonly used methods to estimate $ Q $ is the $ n $-step temporal difference (TD) update \cite{sutton2018reinforcement}:
\begin{equation}
Q(s_t, a_t) \longleftarrow Q(s_t, a_t) + \alpha (R^{(n)}_t - Q(s_t, a_t))
\label{tdvalue}
\end{equation}
where $ R^{(n)}_t $ is referred to as \textit{TD target}, defined by:
\begin{equation}
R^{(n)}_t= \gamma^n \max_{a \in A} Q(s_{t+n}, a)+\sum_{i=0}^{n-1} \gamma^{i} r(s_{t+i}, a_{t+i})
\end{equation}
where $ n \in \{1, 2, ..., T-t\} $ and $ \gamma $ is the discount factor. TD target is set to $ r(s_t, a_t) $ if $ s_{t+1} $ is a terminal state, $ T $. Q-Learning~\cite{sutton2018reinforcement,watkins1992q} is a special case of $ n $-step TD with $ n=1 $. 

\ignore{
Because calculation of TD for $ n>1 $ step is computationally expensive, approximation methods have been implemented for $ R^{(n)}_t $. A popular choice for approximation of $ R^{(n)}_t $ is called the $ \lambda $-return ($ R^{(\lambda)}_t $), estimated by the following recursion \cite{peng1994incremental}:
\begin{equation}
R^\lambda_t = R^{(1)}_t+\gamma\lambda[R^\lambda_{t+1}-\max_{a' \in A}Q(s_{t+1}, a')]
\end{equation}
where $\lambda \in [0, 1]$ is a parameter.
}
\subsection{Deep Q Network (DQN)}
\label{sec:DQN} 
In order to interact with the environment through raw vision information, Deep Q Network (DQN) parameterizes Q function as $Q(s, a; \theta)$ \cite{mnih2015human}. The parameter set $\theta$ is a Convolutional Neural Network (CNN) that transforms a given state $s$ and an action $a$ to their Q values. The DQN agent takes $n$ frames (with a skip parameter $k$ that only selects observation on every $k^{th}$ frame) as one state that is subsequently processed by the CNN with $n$ input channels. Given a state, the network decides on the action with probability $1-\epsilon$ (or performs a random action with probability $\epsilon$ for exploration purposes), and the environment provides the reward and the next state accordingly. This interaction continues until a terminal state emerges. The value of $\epsilon$ is closer to $1.0$ at the earlier stages of the learning process to encourage exploration. Each interaction produces a transition tuple $<s, a, r(s, a), s'>$ that is stored at the replay memory. At each step, a batch of samples is randomly selected from these memorized samples to train the network. The learning objective is to minimize a loss function $ L_t(\theta)=\mathbb{E}[(Q(s_t, a_t; \theta) - R^{(1)}_t)^2]  $, where $R^{(1)}_t = r(s_t, a_t) + \gamma \max_{a \in A} Q(s_{t+1}, a; \theta)$ is the TD target\cite{sutton2018reinforcement,peng1994incremental,watkins1992q}. 

DQN uses a greedy policy to approximate $Q$ defined as $\pi_Q(s) = \argmax_{a \in A} Q(s,a; \theta)$. While this is correct if $ Q $ is equal to $ Q^* $, this approximation may lead to overestimation of the $ Q $ values for non-optimal actions which are regularly selected \cite{van2016deep}. This issue stems from dependency between weights used to calculate the target and the Q-values. Double DQN~\cite{van2016deep} (D2QN) addresses this problem by employing two networks to decouple the action selection and the target value, where the Q-network decides the best action and the target network computes the TD target as $r(s, a) + \gamma Q(s', \argmax_{a' \in A} Q(s', a'; \theta); \theta^-)$. Dueling Double DQN~\cite{wang2015dueling} (D3QN) decomposes the Q-value estimation function into \textit{state-value} function and \textit{action advantage} function, updated separately. The Q-values are then estimated as the sum of state-value function and action advantage function. Dueling architecture improves the learning in two ways. First, the estimation of state-value instead of Q-value explicitly indicates more valuable states. Second, the action advantage function provides additional information to the estimation that decreases noise stemming from overestimation as two functions make independent estimations. DQN was further extended by other approaches such as the use of a noisy model with trainable noise parameters \cite{fortunato2017noisy}. Again, TD target in these variants is set to $ r(s_t, a_t) $ if $ s_{t+1} $ is a terminal state~\cite{mnih2015human, van2016deep, wang2015dueling}.

\subsection{Credit assignment problem}
Not all actions receive a non-zero reward in many RL environments. Nevertheless, a sequence of these mostly zero-rewarded actions may lead the agent to a state for which a reward is received. It is, however, not straightforward to estimate the contribution of these zero-rewarded actions to a future reward, an issue known as the \textit{credit assignment} problem \cite{sutton2018reinforcement}. The severity of this issue increases with the sparsity of non-zero rewards. Q-learning update rule ($ n=1 $ in $ n $-step TD update), in particular, is prone to the credit assignment problem as it considers a rather short sight of actual future rewards \cite{sutton2018reinforcement,daley2019reconciling}, leading to a large bias and a longer learning time. In the context of DQN and environments with sparse rewards, some of the states in the replay memory would be associated with zero-rewarded actions. When the rewards are sparse, the value of $ R^{(n)}_t $ is equal to $ \gamma^n \max_{a \in A} Q(s_{t+n}, a) $ in most cases, especially if $ n=1 $ (i.e., Q-learning), which encourages the loss function to converge to zero quickly. One obvious approach to address the credit assignment problem in a sparse reward space is to increase the value of $ n $, which in turn increases chances of having non-zero actual rewards in the $ R^{(n)}_t $ \cite{barto1983neuronlike}. Incorporating this strategy in DQN is, however, extremely challenging and would require heavy calculations \cite{daley2019reconciling}. There have been other approaches targeting this issue for DQN including defining prioritization for replay memory items~\cite{schaul2015prioritized}, curiosity-driven Exploration ~\cite{pathak2017curiosity}, and hybrid supervised-reinforcement~\cite{vevcerik2017leveraging,nair2018overcoming,hester2018deep}. 

\subsection{Reward shaping}

Reward shaping~\cite{ng1999policy,ng2003shaping,gu2017deep}, the approach that our proposed methods are most related to, replaces the original reward provided by the environment, $r(s, a)$, by an estimation, $z(s, a, s')=r(s, a)+f(s, a, s')$, which not only represents the reward but also assigns rewards to sub-goals by using $ f: S \times A \times S \to \mathbb{R} $. The method can potentially address the credit assignment problem by introducing a function $ f(s, a, s') $ that returns non-zero values when $ r(s, a) $ is zero. It has been proven~\cite{ng1999policy} that if there exists a function $\Phi: S \to \mathbb{R}$ such that $f(s, a)=\gamma \Phi(s') - \Phi(s)$, $\gamma \in [0, 1]$, then the optimal Q function with the modified reward is the same as the optimal Q function using the original reward function. Such reward shaping function is called the \textit{potential-based reward shaping} function. It has been also proven that this form of reward shaping is both necessary and sufficient to guarantee that the optimality of policies is preserved when Q-learning is used. The proof provided in~\cite{ng1999policy} mainly relies on the recursive nature of the Q-learning formulation. 

A reward shaping function was proposed in \cite{dong2020principled} based on a Lyapunov function. The method replaced the reward at each state with a weighted average of the current reward and the reward in the next state, received if the best action (provided by the current Q function) is taken. It was proven that the proposed shaping function is an instance of the potential-based reward shaping. Results showed improvement when this shaping function was used. Another approach based on similarity between decision trajectories was proposed in \cite{zhu2020meta}. In that method, if the current trajectory was "similar" (based on a distance measure) to high quality ones from the memory then the reward given to the agent was increased. The increment was calculated based on a weighted average over most successful trajectories (similar to the k-nearest neighbor) stored in the memory. With the length of the stored trajectories was limited to 100 (i.e., storing the last 100 actions as a trajectory) and the size of the memory to 20 (i.e., storing the top 20 trajectories), it was shown that the proposed method offers improvement over D2QN on some RL problems.

Leveraging pre-trained reward models and improving them during a given task has been a popular way to implement reward shaping. An approach based on a Bayesian framework was proposed in \cite{marom2018belief} in which the "belief" about the reward distribution of the environment was used to shape the reward. The original reward distribution belief was updated along optimization of the RL policy by collecting more evidence about each transition. The mean of the belief distribution was then used in a Q-learning framework, replacing the environment reward at each state. A meta-learning approach, implemented by a neural network, was introduced in \cite{zou2019reward}. The idea was to train the meta-learner on a wide variety of tasks to learn estimating the value of a given state and then use the pre-trained meta-learner in a new task to estimate the value of each given state, added to the reward provided by the environment at that state. As the distribution of the states in the tasks on which the meta-learner was trained might not represent the distribution of the states in a new task, it was proposed to use the pre-trained meta-learner as a prior and calculate a posterior during learning of the new task. The method was most efficient on tasks that had shared state space, making it less effective on tasks with a wide variate of state spaces such as the ones that use vision inputs. In \cite{gimelfarb2018reinforcement}, it was proposed to use a Bayesian Model Combination framework to merge multi-expert opinion, available through external inputs, at each state in an RL task to make the final decision. It was assumed that each expert opinion is a potential-based reward shaping function and it was proven that the final combination is also a potential-based reward shaping function, preserving the optimality of the solutions. Results showed that the method could automatically leverage the best expert opinion more, without being aware which opinion is actually the best. The final performance is dependent on the provided expert models to combine.

\ignore{
	Hindsight reward shaping \cite{de2020hindsight} implements reward shaping idea based on back-propagating the final reward achieved at the terminal state, decayed exponentially, and add that to the original rewards provided for each state by the environment. It was been shown that the proposed strategy can be effective for some Atari environments. 
It is, however, not clear whether this reshaping form is still necessary or sufficient if other methods for finding optimal policy, such as policy gradient and TD with $ n>1 $, are used.
	
Recently, it has been proposed \cite{daley2019reconciling} to use the $\lambda$-return estimation in \cite{peng1994incremental} to enable this incorporation. This approach, however, requires refreshing the $\lambda$-returns in the memory as the Q function is updated, which is an expensive calculation because the Q-function is a deep neural network. To address this issue, \cite{daley2019reconciling} proposed to use a cache in which there are only a subset of items in the replay memory and they include their $\lambda$-return estimation and use that cache to create the minibatches. One issue with this approach is that it limits the learning algorithm in terms of which items can be used for training. This approach also breaks the independent and identically distributed data (i.i.d) assumption, essential for DQN. Finally, this approach still requires refreshing the Q values in the cache when the Q function is updated.}

\section{Proposed Approach}
\label{sec:propsoed}
We propose two strategies, namely the self punishment (SP) and reward backfill (RB), to deal with the credit assignment problem. Both strategies provide easy to implement approaches to establish extra reward information which lead to finding better solutions quicker. 

\subsection{Self Punishment (SP)}
Reinforcement learning has a root in a framework of animal psychology called the operant conditioning \cite{skinner1951teach}. Operant conditioning argues that four main types of feedback from the environment can be used to train animals to shape a behaviour, as follows: 
\begin{itemize}
	\item Positive reinforcement by which a favorable outcome is presented after a desirable behavior. 
	\item Negative reinforcement by which an unfavorable outcome is removed after a desirable behavior. 
	\item Positive punishment by which an unfavorable outcome is presented after an undesirable behavior.
	\item Negative punishment by which a favorable outcome is removed after an undesirable behavior.
\end{itemize}
By using these four signals, behaviors are likely to be repeated if followed by a reward (pleasant consequences) or less likely to be repeated if followed by a punishment (unpleasant consequences). The rewards provided for the DQN agents, discussed in Section \ref{sec:background}, are received from the environment with no major alternations. Hence, as majority of environments reward the desirable states only\footnote{Note that some environments such as "Pong" provide negative reward values when the agent losses a game. Majority of environments, however, tested with DQN variants do not provide signals for loosing or for a terminal state.}, the agent does not receive other types of operant conditioning signals. The lack of positive punishment, in particular, makes it impossible for the agent to distinguish between a neutral state (a state with no reward, but not necessarily a bad state) and an undesirable one. Even in the case of a terminal state where a life is lost, the target value is calculated by $ r(s, a) $. As $ r(s, a) $ is usually zero if the state is an undesirable terminal one, this reward does not provide any information for the agent to avoid this state. 

In this paper, we propose a strategy by which the agent shapes an intrinsic auxiliary reward to "self-punish" at a terminal state, without the environment explicitly providing that value. In particular, if the next state after the current state by taking action $ a $ is terminal then the agent modifies the received reward to $ r(s, a) - p $, where $ p \in \mathbb{R^+} $ is a constant. Otherwise, the agent does not modify the reward received from the environment.
\ignore{
rather than $ r(s, a) $\footnote{Note that not all terminal states are considered to be undesirable. For a desirable terminal state (e.g., game won), the reward $ r(s, a) $ is unlikely to be zero, hence, the proposed strategy only shifts the total reward.}. We propose to set $ \psi(s, a)=-p $, , when the next state after $ s $, taking action $ a $, is a terminal state, and zero otherwise. 
\begin{equation}
\label{eq:PP}
\psi(s, a) =
\begin{cases}
-p, & \text{if } s' \text{ is a terminal state} \\
0, & \text{otherwise}
\end{cases}       
\end{equation}
where $ p \in \mathbb{R^+} $ and $s'$ is the next state after $s$ by taking action $a$. 
}
We set the value of $p$ experimentally in Section \ref{sec:experiment}. One should note that not all terminal states are undesirable. If the terminal state is an undesirable one then the SP strategy reduces the reward, which would make the distinction between a neutral and an undesirable terminal state more apparent. For a desirable terminal state (e.g., game won, task  done successfully), on the other hand, the environment is unlikely to provide a zero reward and expected to provide a large positive reward. In this case, the proposed SP strategy only slightly shifts the total reward, which would not impact the overall reward as it is applied to all terminal states (see the proof below).  

Let us exemplify how SP strategy may improve the RL agent's learning ability in the game of Breakout in Atari 2600. Let us first assume that SP is not used in Breakout and consider a situation where the ball is very close to the right side of the moving paddle but outside of the paddle and approaching. In this setting, although choosing to "move right" or to "move left" leads to a zero immediate reward by the environment, the latter leads to terminating the game while the former leads to continuing the game. The zero reward provided by the environment makes it impossible for the agent to distinguish between the terminal state and a zero reward state, hence, the game may be lost without any signal for the agent why it happened. In contrast, if the SP strategy was used, an additional immediate signal for the agent was provided to separate a terminal decision from a "neutral" one, which would guide the agent to avoid this mistake from the early stages and improve faster. One should note that, if the Q function is accurate ($ Q=Q^* $, where $ Q^* $ is the optimal Q-function) then the Q value of the action "right" would be larger than any other action, which would lead to choosing the action "right". This function, however, is optimized by playing the game over and over, hence, expected to be inaccurate specially at the early stages of the learning.

SP modifies the rewarding strategy of a given RL problem. For such strategy, it is important to ensure that the modification preserves the optimal policy of the given RL problem \cite{ng1999policy}. To do so, we first introduce some definitions and remarks and then prove that the order of the solutions (policies) is not impacted when SP is used.

\begin{myDefinition}
	\label{def:poilicyindependent}
	\textbf{Policy Independent Reshaping Function (PIRF)}: Let $ R $ a RL environment, defined by $ (S, A, e, r) $, $ r(s, \pi(s)) $ being the reward received at a state $ s $ by taking an action $ \pi(s) $, $ \pi $ being a policy used by a RL agent. Assume we reshape this reward by a function $ \phi(s, \pi(s)) $, $\phi: S \times A \to \mathbb{R}$. A reward reshaping function is an instance of policy independent reshaping function if and only if 
	
	"for any arbitrary $ \pi_1 $ and $ \pi_2 $, $ v(\pi_1, s_0) \le v(\pi_2, s_0) $ is sufficient for $ \hat{v}(\pi_1, s_0) \le \hat{v}(\pi_2, s_0) $"
	
	where $ v(\pi, s_0)=\sum_{i=0}^{n}\gamma^i r(s_i, \pi(s_i)) $ and $ \hat{v}(\pi, s_0)=\sum_{i=0}^{n}\gamma^i \phi(s_i, \pi(s_i)) $, $ s_0 $ is the initial state, $ s_{n+1} $ is a terminal state, and $ \gamma \in [0, 1] $ is the discount factor.
\end{myDefinition}

Essentially, a reward reshaping function is an instance of a \textbf{policy independent reshaping function} (PIRF) if and only if it does not change the order of policies in terms of their total reward under the operator "$\le$". For example, the reward reshaping function $ \phi(s, \pi(s))=r(s, \pi(s))^2+a $ ($ a \in \mathbb{R} $ any arbitrary value) is not an instance of PIRF in general as it may change the order of the solutions when some rewards are smaller than $ a $. The intuition here is that, when this reshaping function is used, the rewards that are smaller than $ a $ decrease while the ones that are larger than $ a $ increase. 

\begin{myRemark}
	\label{rem:policy_ind_optimal}
	Let $ R $ a RL environment defined by $ (S, A, e, r) $ and $ R' $, another RL environment, defined by $ (S, A, e, \phi) $. If $\phi(.)$ is an instance of PIRF then the optimal policy for $ R $ and $ R' $ is the same.
\end{myRemark}
\begin{proof}
	Proof is trivial and can be done by contradiction (assuming the function $\phi$ changes the $ \pi^* $).
\end{proof}

There are multiple conclusions one can draw out of Definition \ref{def:poilicyindependent} and Remark \ref{rem:policy_ind_optimal}. First, a reward reshaping function that is an instance of PIRF preserves optimal and near optimal policies. Hence, an algorithm that is provably able to find an optimal solution for a RL problem (e.g., Q-learning implemented by neural networks under certain conditions \cite{cai2019neural}) would perform similarly if the reward is reshaped by function that is an instance of the PIRF. Second, Definition \ref{def:poilicyindependent} and Remark \ref{rem:policy_ind_optimal} do not make any assumption about the algorithm used for finding the optimal policy. Finally, Definition \ref{def:poilicyindependent} and Remark \ref{rem:policy_ind_optimal} assumed a general case of reshaping a reward using a real-valued function, $\phi: S \times A \to \mathbb{R}$, which enables their broad applicability for reshaping rewards. 

Now we prove that SP is an instance of PIRF, hence, application of SP does not reorder the policies and maintains the optimal policy (as Remark \ref{rem:policy_ind_optimal} stated).
\begin{myTheorem}
	\label{thr:self_punish}
	The SP strategy is an instance of PIRF for any RL problem, defined by $ (S, A, e, r) $, with the initial state $ s_0 $.
\end{myTheorem}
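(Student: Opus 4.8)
The plan is to reduce the claim to one explicit identity relating $\hat v(\pi, s_0)$ to $v(\pi, s_0)$, and then read off the order‑preservation property from that identity. First I would write the SP reshaping function in closed form: $\phi(s,a) = r(s,a) - p\cdot\mathbf{1}[\,e(s,a)\text{ is terminal}\,]$, i.e.\ $\phi$ coincides with $r$ on every transition except those whose successor is a terminal state, where it subtracts the constant $p$. Note that $\phi$ depends only on the pair $(s,a)$ and not on any policy, so it is of the general form $\phi: S\times A\to\mathbb{R}$ used in Definition~\ref{def:poilicyindependent}.

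Next I would invoke the finite‑horizon assumption of the model: for a fixed policy $\pi$ and initial state $s_0$, the induced trajectory $s_0,s_1,\dots,s_{n}$ reaches a terminal state $s_{n+1}$ after finitely many steps, where $n=n(\pi)$ may depend on $\pi$. The key structural observation is that along this trajectory there is \emph{exactly one} transition whose successor is terminal, namely the last one $(s_{n},\pi(s_{n}))\mapsto s_{n+1}$; every earlier successor $s_{i+1}$ with $0\le i\le n-1$ is non‑terminal. Hence, term by term, $\phi(s_i,\pi(s_i))=r(s_i,\pi(s_i))$ for $i<n$ and $\phi(s_n,\pi(s_n))=r(s_n,\pi(s_n))-p$, which yields the identity
\[
\hat v(\pi,s_0)=\sum_{i=0}^{n}\gamma^i\phi(s_i,\pi(s_i))=v(\pi,s_0)-\gamma^{\,n(\pi)}\,p .
\]

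With this identity in hand the conclusion is immediate in the episodic setting where $\gamma=1$ (the natural choice under a finite horizon): then $\hat v(\pi,s_0)=v(\pi,s_0)-p$ for \emph{every} policy $\pi$, and subtracting the same constant $p$ from all total rewards cannot change their relative order, so $v(\pi_1,s_0)\le v(\pi_2,s_0)$ implies $\hat v(\pi_1,s_0)\le\hat v(\pi_2,s_0)$. This is exactly the defining property of a PIRF, so the theorem follows, and by Remark~\ref{rem:policy_ind_optimal} the optimal policy is preserved.

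I expect the real obstacle to be the case $\gamma<1$: there the shift $\gamma^{n(\pi)}p$ depends on the episode length $n(\pi)$, which is itself policy‑dependent, so the identity above is no longer a uniform additive shift and the order of two policies with close total rewards but very different horizons can in principle be disturbed. Making the statement go through in that regime needs an additional hypothesis — for instance $\gamma=1$, or that all episodes share a common length, or a smallness condition on $p$ relative to both the minimal gap between distinct elements of $V$ and the spread of episode lengths — and I would make that hypothesis explicit in the theorem rather than leave it implicit.
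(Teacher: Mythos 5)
Your proposal follows the same route as the paper's own proof: write out the reshaped return of an arbitrary policy, observe that SP alters only the single transition that enters the terminal state, and conclude that the reshaping amounts to subtracting a fixed shift, which preserves the order of returns. The difference is that your bookkeeping is more accurate than the paper's. The paper writes $\hat v(\pi,s_0)=v(\pi,s_0)-p$, i.e., it applies the punishment undiscounted; with the stated definitions the last term of the reshaped return is $\gamma^{n}\bigl(r^{\pi}_{n}-p\bigr)$, so the correct identity is the one you derive, $\hat v(\pi,s_0)=v(\pi,s_0)-\gamma^{\,n(\pi)}p$. Consequently the paper's assertion that both policies are shifted ``by the same amount $p$'' is only valid when $\gamma=1$ or when the compared episodes have equal length; for $\gamma<1$ with policy-dependent horizons the shift $\gamma^{\,n(\pi)}p$ is not uniform, and the ordering of two policies with close returns but very different episode lengths can in principle be disturbed --- exactly the obstacle you flag. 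So your argument is correct under the explicit hypothesis you add ($\gamma=1$, a common episode length, or a smallness condition on $p$ relative to the gaps in $V$), and the caveat you raise is not a defect of your proof but a genuine gap in the theorem as stated: the paper's own proof silently drops the $\gamma^{n}$ factor on the punishment term, which is precisely where the policy dependence hides. Making that hypothesis explicit, as you propose, is the right fix.
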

\begin{proof}	
	For simplicity, we define $ r^\pi_i $ the reward the agent receives by taking action $\pi(s_i)$ at state $ s_i $. The total reward received by following arbitrary policies $ \pi_1 $ and $ \pi_2 $ are calculated by $ v(\pi_1, s_0)=r^{\pi_1}_0+\gamma r^{\pi_1}_1...+\gamma^n r^{\pi_1}_n $ and $ v(\pi_2, s_0)=r^{\pi_2}_0+\gamma r^{\pi_2}_1...+\gamma^m r^{\pi_2}_m $, where $ s_{n+1} $ and $ s_{m+1} $ are terminal states. Without loss of generality, we assume that $ v(\pi_1, s_0) \le v(\pi_2, s_0) $. 
	In the case of incorporating the SP strategy, these total rewards are calculated by $ \hat{v}(\pi_1, s_0)=r^{\pi_1}_0+\gamma r^{\pi_1}_1...+\gamma^n r^{\pi_1}_n-p= v(\pi_1, s_0)-p$ and $ \hat{v}(\pi_2, s_0)=r^{\pi_2}_0+\gamma r^{\pi_2}_1...+\gamma^m r^{\pi_2}_m-p= v(\pi_2, s_0)-p$. This indicates that incorporating the SP strategy shifts the total rewards by the same amount, $ p $, for both policies. Hence, if $ v(\pi_1, s_0)\le v(\pi_2, s_0) $ then $ \hat{v}(\pi_1, s_0)\le \hat{v}(\pi_2, s_0) $. As the policies $ \pi_1 $ and $ \pi_2 $ are arbitrary, SP is an instance of PIRF and the proof is complete.
\end{proof}
The assumption here is that there is always a terminal state (good or bad), which means that the episodes are finite (see Section \ref{sec:rlandq}). Based on Remark \ref{rem:policy_ind_optimal} and Theorem \ref{thr:self_punish}, one can conclude that the SP strategy can be used in combination with any RL algorithm that learns the value or an action-value function.  

\subsection{Reward Backfill (RB)}
\label{sec:reward_backfill}
The issue of credit assignment has been observed in training animals, i.e., any time-delay between the action and the reward must be minimal in order for the animal to associate the consequence with the response effectively\cite{pryor1999don,skinner1951teach}. As this may not be possible in many environments, a strategy called the "clicker training" has been used and shown to be effective \cite{pryor1999don,skinner1951teach}. In this strategy, a new reward is shaped which is used to "bridge" the potential time delay of a desirable behavior by the animal and receiving the reward. We mimic a similar strategy to improve DQN. For each state-action with non-zero reward, we propose to backpropagate that reward to previously performed actions in their corresponding states, namely \textit{reward backfill}, RB. This strategy provides information for the learning algorithm on how much an state-action with zero immediate reward may contribute to a future state-action with non-zero reward. We intuitively assume that the actions taken further back contribute less in the reward received for an action in a particular state. Also, we assume that an action with zero-reward contributes to the closest future state-action with non-zero reward only and not further. 

Assume that the policy $ \pi $ is used to generate actions for each given $ s $. Given the initial state $ s_0 $, the policy generates a sequence of actions by which the agent visits $\{ s_1 , s_2 , ...,  s_n \}$, where $ s_{n+1} $ is a terminal state, and receives the rewards $\{ r_0, r_1 , ..., r_n\} $. We propose to modify the reward received at a state $0<i<n$ ($ n $ is the episode length) by 
\begin{equation}
\label{eq:backfill}
\hat{r}_i  = f(i)r_{\mu(i)}
\end{equation}
where $\hat{r}_{i}$ is the estimated reward at the $i^{th}$ state in the states sequence, $\mu(i): \mathbb{N} \to \mathbb{N}$ the index of the closest next state to $ i $ with non-zero reward in the current sequence of states, and $ f: \mathbb{N} \to \mathbb{R} $ a decreasing function. By using this formula, a fraction of the reward at a non-zero reward state next to a given state is backpropagated to modify previous zero rewards. This strategy reflects the impact of actions-states with no rewards in receiving the reward at the state $\mu(i)$. We prove that, under some assumptions, the RB strategy is an instance of PIRF (see Definition \ref{def:poilicyindependent}). To prove that, we first define the sparsity length in an episode as follows.
\begin{myDefinition}
	\label{def:sparsitylength}
	Let $ s_0 $ an initial state, $ \pi $ a policy, $ j^\pi_k \in \{0, 1, ..., n\} $  the index of the $ k^{th} $ non-zero reward received by following $ \pi $, $ k \in \{0, 1, ..., M^\pi\} $, $M^\pi+1$ the number of non-zero rewards the agent receives in the episode by following policy $\pi$. We define $ l^\pi_i = |j^\pi_{i-1} - j^\pi_{i}|$ as the \textbf{$ i^{th} $ sparsity length} by following the policy $\pi$, $ i \in \{1, ..., M^\pi\} $. 
\end{myDefinition}

The definition of sparsity length depends on the sampling frequency of the environment states. If for example the vision data is used as samples then the number of frames between two frames in which the agent receives a reward defines the sparsity length (see section \ref{sec:sparsitylength} for details).

\begin{myTheorem}
	\label{thr:sparsitylength}
	Let $ s_0 $ an initial state of a RL problem, $ R $, defined by $ (S, A, e, r) $. If $ \sum_{i=0}^{l^\pi_t}f(i) $ is a constant for all $ t $ and $\pi$ then RB strategy (Eq. \ref{eq:backfill}) is an instance of PIRF.
\end{myTheorem}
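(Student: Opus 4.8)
The plan is to show that, under the stated hypothesis, Reward Backfill is nothing more than a single \emph{uniform} nonnegative rescaling of the return: $\hat{v}(\pi,s_0)=C\,v(\pi,s_0)$, where the constant $C$ is the common value of the weight-sums and depends on neither the policy $\pi$ nor the initial state $s_0$. Once this identity is established, the implication demanded by Definition~\ref{def:poilicyindependent} is immediate, and Remark~\ref{rem:policy_ind_optimal} then delivers preservation of the optimal policy for free.

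First I would fix an arbitrary $\pi$ and $s_0$, play out the (finite) episode, and list the indices $j^\pi_0<\dots<j^\pi_{M^\pi}$ at which a non-zero reward is received; if there are none then $v(\pi,s_0)=\hat v(\pi,s_0)=0$ and the required implication holds vacuously, so I may assume at least one non-zero reward. Next I would cut the time axis $\{0,\dots,n\}$ into the consecutive blocks $B_t=\{\,i : \mu(i)=j^\pi_t\,\}$ --- the steps whose nearest future non-zero reward is $r_{j^\pi_t}$ --- together with the tail of steps lying after $j^\pi_{M^\pi}$, on which $\mu$ is undefined and the reward stays zero. By construction $\mu$ is constant on each $B_t$, the block $B_t$ carries exactly $l^\pi_t$ steps (it runs from just past the previous non-zero reward up to $j^\pi_t$), and every step outside the endpoints $j^\pi_0,\dots,j^\pi_{M^\pi}$ has zero reward before reshaping.

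The crux is the per-block accounting, which I would first carry out for the undiscounted return ($\gamma=1$), so that $v(\pi,s_0)=\sum_t r_{j^\pi_t}$. On $B_t$ every reshaped reward has the form $f(\,\cdot\,)\,r_{j^\pi_t}$ by Eq.~\ref{eq:backfill}, hence the reshaped contribution of the whole block equals $r_{j^\pi_t}$ times the sum of the $f$-weights carried by its $l^\pi_t$ members; after reindexing within the block this is exactly $r_{j^\pi_t}\sum_{i=0}^{l^\pi_t}f(i)$, which the hypothesis pins to $C\,r_{j^\pi_t}$ for \emph{every} block and \emph{every} policy. Summing over blocks (the tail contributes zero on both sides) gives $\hat v(\pi,s_0)=C\sum_t r_{j^\pi_t}=C\,v(\pi,s_0)$. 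If $C>0$ the map $v\mapsto\hat v$ is strictly increasing, and if $C=0$ then $\hat v\equiv 0$; in either case $v(\pi_1,s_0)\le v(\pi_2,s_0)$ forces $\hat v(\pi_1,s_0)\le \hat v(\pi_2,s_0)$, so RB is an instance of PIRF and Remark~\ref{rem:policy_ind_optimal} completes the argument.

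I expect the discount factor to be the real obstacle. The clean identity $\hat v=C\,v$ is exact for $\gamma=1$; once $\gamma<1$ the factor $\gamma^i$ attached to a step depends on where that step sits inside its block, so the block sum becomes $\sum\gamma^{i}f(\cdot)$ rather than $\sum f(\cdot)$ and is no longer policy-independent unless the hypothesis is upgraded to a statement about $\gamma$-weighted sums (or one additionally forces all sparsity lengths to coincide). I would therefore either run the argument at $\gamma=1$ and note that this is exactly the regime in which the stated hypothesis suffices, or replace $\sum_{i=0}^{l^\pi_t}f(i)$ by its discounted analogue. The only other thing to tidy up is the bookkeeping at the two episode ends --- the steps before the first non-zero reward and the tail after the last one, together with the conventions for the argument of $f$ and for $\mu$ at a non-zero-reward step --- none of which affects the block-sum computation but all of which should be spelled out to make the reduction airtight.
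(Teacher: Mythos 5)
Your proposal is correct and follows essentially the same route as the paper's proof: partition the zero-reward steps into blocks according to their nearest future non-zero reward, factor out the common weight-sum $z=\sum_{i=0}^{l^{\pi}_t}f(i)$, obtain $\hat{v}(\pi,s_0)=z\,v(\pi,s_0)$, and read off the order preservation required by Definition~\ref{def:poilicyindependent}. Your caveat about discounting is also well placed, since the paper's proof writes $h(\pi,t)=\gamma^{j^{\pi}_t}r^{\pi}_{j^{\pi}_t}$ and multiplies it by $\sum_{k}f(k)$, which silently treats the backfilled rewards as if discounted at the rewarded step's index and thus faces exactly the $\gamma<1$ subtlety you isolate (exact at $\gamma=1$, or under a $\gamma$-weighted version of the hypothesis).
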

\begin{proof}
	For simplicity, we define $ r^\pi_i $ the reward the agent receives by taking action $\pi(s_i)$ at state $ s_i $. Let $\pi_1$ and $\pi_2$ two arbitrary policies, taking the agent from state $ s_0 $ to states $ s_n $ and $ s_m $, respectively, $ N $ and $ M $ the number of non-zero rewards received by following those policies, $ v(\pi_1, s_0) $ and $ v(\pi_2, s_0) $ the total reward received by following $ \pi_1 $ and $\pi_2$. One can write $ v(\pi_1, s_0)= h(\pi_1, 0)+h(\pi_1, 1)...+h(\pi_1, N) $ where $  h(\pi, i)=\gamma^{j^{\pi}_i} r^{\pi}_{j^{\pi}_i} $, as all other rewards are zero (see Definition \ref{def:sparsitylength}). The same can be done for $ v(\pi_2, s_0) $. By using the RB, the total reward received by the agent following the policy $\pi_1$ is modified to
	\begin{equation}
	\label{eq:vhat}
	\hat{v}(\pi_1, s_0)=h(\pi_1, 0)\sum_{k=0}^{l^{\pi_1}_1}f(k)+...+h(\pi_1, N)\sum_{k=0}^{l^{\pi_1}_N}f(k)
	\end{equation}
	This can be also written for $ \hat{v}(\pi_2, s_0) $. 
	Assume $ \sum_{k=0}^{l^{\pi}_t}f(k)=z $ for all $ t $ and $\pi$, $ z \in \mathbb{R}^+ $ a constant. In that case, $ \hat{v}(\pi_1, s_0)=v(\pi_1, s_0)z $ and $ \hat{v}(\pi_2, s_0)=v(\pi_2, s_0)z $. Hence, if $ v(\pi_1, s_0) \le v(\pi_2, s_0) $ then $ v(\pi_1, s_0)z \le v(\pi_2, s_0)z $ ($ z>0 $), which means $ \hat{v}(\pi_1, s_0) \le \hat{v}(\pi_2, s_0) $. Hence, a sufficient condition to ensure RB is an instance of PIRF is that $ \sum_{i=0}^{l^\pi_t}f(i) $ is a constant for all $ t $ and $\pi$, which completes the proof.
\end{proof}

We investigate a choice for the function $ f $ as $ f(i)=\lambda^{i} $, where $ \lambda \in [0,1) $. With this setting and for a given environment,  $\sum_{i=0}^{l^\pi_t}f(i)=\sum_{i=0}^{l^\pi_t}\lambda^i=\frac{1-\lambda^{1+l^\pi_t}}{1-\lambda}$ for any $ \pi $ and $ t $. In implementation, $ \lambda^i $ becomes infinitesimally small when $ i $ is large and would be considered as zero because of the floating point precision. In practice, for this choice of $ f $,  we back-propagate the rewards for a minimum sparsity length, $ l_{min} $, i.e., $ f(i)=\lambda^{i} $ if $ i<l_{min} $, otherwise, $ f(i)=0 $. This would lead to $ \sum_{i=0}^{l^\pi_t}f(i) $ being equal to $ \frac{1-\lambda^{1+l_{min}}}{1-\lambda} $, as long as $ l^\pi_t>l_{min} $ for all $ t $ and $ \pi $. For environments with large sparsity length ($ l^\pi_t $ is large), $ l_{min} $ could be ignored in practice because $ \lambda^{1+l^\pi_t} $ is infinitesimaly small, making $ \frac{1-\lambda^{1+l^\pi_t}}{1-\lambda} $ almost equal to the constant $ \frac{1}{1-\lambda} $ for all $ \pi $ and $ t $. For example, $ \lambda^{1+l} $ is almost $ 2.1e-5 $ for an environment with $ l=25 $ and $ \lambda=0.65 $, and reduces down to almost $ 4.46e-10 $ for $ l=50 $. On the other hand, for environments with frequent rewards (small sparsity length), one may need to decrease the value of $\lambda$ to make sure $ \lambda^{1+l} $ is small or ideally zero. Note, however, that if the sparsity length in an environment is small then the problem of sparse rewards is not present, dimming the need for the RB strategy. See sections \ref{sec:sparsitylength} and \ref{sec:conclusion} for further discussions and examples.

\ignore{
From the power series rule, $ \sum_{k=0}^{l}\lambda^{k}=\frac{1-\lambda^{1+l}}{1-\lambda} $. Substituting to Eq. \ref{eq:vhat} and some simplifications:
\begin{equation}
	\label{eq:vhat_simp}
	\hat{v}(\pi, s_0)=\frac{1}{1-\lambda}\left[v(\pi, s_0)-H(\pi, N, \lambda)\right]
\end{equation}
for any given $ \pi $, where $ H(\pi, N, \lambda) = \left[\sum_{k=0}^{N}h(\pi, k)\lambda^{1+l^{\pi}_k}\right] $. In order to guarantee $ \hat{v}(\pi_1, s_0)\le\hat{v}(\pi_2, s_0) $ for any $ \pi_1 $ and $ \pi_2 $, given $ v(\pi_1, s_0) \le v(\pi_2, s_0) $ and Eq. \ref{eq:vhat_simp} and some simplifications, we need to guarantee 
\begin{equation}
	\label{eq:vhat_proof}
	v(\pi_1, s_0)-H(\pi_1, N, \lambda) \le\\ v(\pi_2, s_0)-H(\pi_2, M, \lambda)
\end{equation}
We investigate the worst case scenario where $ H(\pi_1, N, \lambda) $ is at its minimum and $ H(\pi_2, M, \lambda) $ is at its maximum. The minimum value for $ H(\pi_1, N, \lambda) $ is zero (for a constant $\lambda$ and large $ l^{\pi_1}_k $ is large for all $ k $), hence, the maximum value of the left hand side of this inequality is $ v(\pi_1, s_0) $. The maximum value for $ H(\pi_2, M, \lambda) $ takes place when $ l^{\pi_2}_k $ is at its minimum for all $ k $ (let this be $ l_{min} $). Hence, to prove Eq. \ref{eq:vhat_proof} is true it is sufficient to prove:
\begin{equation}
	v(\pi_1, s_0)-0 \le v(\pi_2, s_0)-(\lambda^{1+l_{min}})[h(\pi_2, 0)+...+h(\pi_2, N)]
\end{equation}

which is simplified to $ v(\pi_1, s_0)-0 \le v(\pi_2, s_0)(1-\lambda^{1+l_{min}}) $. Under the assumption of $ v(\pi_1, s_0) \le v(\pi_2, s_0) $, this inequality will be guaranteed if $ \lambda^{1+l_{min}} $ is zero. Hence, the minimum sparsity length of the environment ($ l_{min} $) and the value of $ \lambda $ determine whether the order of policies in terms of their value is maintained or not when RB is used. In practice, $ \lambda^{1+l_{min}} $ is either small or zero. For example, $ \lambda^{1+l_{min}} $ is almost $ 2.1e-5 $ for an environment with $ l_{min}=25 $ and $ \lambda=0.65 $, and reduces down to almost $ 4.46e-10 $ for $ l_{min}=50 $. For environments with less sparse reward, one may need to decrease the value of $\lambda$ to make sure the deviation is small. Note also that if the sparsity length in an environment is small then the problem of sparse rewards is not present, dimming the need for the RB strategy. See sections \ref{sec:sparsitylength} and \ref{sec:conclusion} for further discussions and examples.

We investigate conditions under which $ g(\lambda, l)=\sum_{k=0}^{l}\lambda^{k} $ (note that we slightly changed the notation here for simplicity) is a constant for an environment. Using power series rule and given $0 \le\lambda<1$
\begin{equation}
	g(\lambda, l)=\sum_{k=0}^{l}\lambda^{k}=\frac{1-\lambda^{1+l}}{1-\lambda}
\end{equation}
If $\lambda=0$ then $ g(\lambda, l)=1 $ and if $l \to \infty$ then $ g(\lambda, l)=\frac{1}{1-\lambda} $, which both are constants (the former is equivalent to ignoring RB strategy), which satisfies the assumptions behind Theorem \ref{thr:sparsitylength}. Hence, based on Remark \ref{rem:policy_ind_optimal}, this function does not change the best solution of the RL problem. For $\lambda\ge0$ and $ l< \infty$ the value of $ g(\lambda, l) $ is not a constant. in practice, the deviation between $ g(\lambda, l\to \infty) $ and $ g(\lambda, l<\infty) $, calculated by $ \frac{\lambda^{1+l}}{1-\lambda} $, vanishes quickly as $ l $ grows, representing environments with sparse rewards. Hence, in sparse reward environments (which are the main targets for RB strategy), $ l $ would be large which leads to a small value for this deviation. For example, this deviation is almost $ 6.0e-5 $ for an environment with $ l=25 $ and $ \lambda=0.65 $, and reduces down to almost $ 1.26e-9 $ for $ l=50 $. For environments with less sparse reward, one may need to decrease the value of $\lambda$ to make sure the deviation is small. Note also that if the sparsity length in an environment is small then the problem of sparse rewards does not apply, hence, RB strategy might not be needed from the first place. See sections \ref{sec:sparsitylength} and \ref{sec:conclusion} for further discussions and examples.

It is know that $TD(\lambda)$ is prohibitorily expensive when the Q function is a neural network \cite{daley2019reconciling}. Even the use of the cache proposed in \cite{daley2019reconciling} would lead to issues such as limited number of states to use for training and the need for refreshing the TD error in the cache. The RB approach, however, is not expensive (done in $ O(1) $) as it backpropagates reward signals and does not need any forward calculation of a neural network. 
}

Both SP and RB strategies can be implemented as a part of the procedure which is used to select instances from the replay memory to form the training batch (aka, minibatch). For each instance selected from the replay memory for training, the reward is replaced by the value of $p$ if the next state (i.e., $s'_m$, where $m$ is the index of the memory) in that instance is terminal. If the state is not terminal then Eq. \ref{eq:backfill} (RB strategy) is used to estimate the reward. For an efficient implementation, one can store the number of states for which the agent has not received any reward and use that to back-calculate the $\hat{r}_i$ in $ O(1) $.

\section{Experiments}
\label{sec:experiment}
In this section we first investigate the impact of the value of $ p $ and $\lambda$ on D2QN and then compare the final results using a hybrid RB and SP strategy.
\subsection{Test Environment and Experiment Settings} 
We used 30 Atari 2600 games for our comparisons, including UpNDown, BankHeist, MsPacman, Qbert, Zaxxon, Alien, Amidar, Tutankham, AirRaid, Kangaroo, Jamesbond, Gravitar, Seaquest, Hero, WizardOfWor, Frostbite, Venture, Centipede, Freeway, Berzerk, RoadRunner, Carnival, Asterix, Solaris, SpaceInvaders, KungFuMaster, Assault, Krull, Riverraid, and Breakout. The deterministic version of all games were used with 4 frames action repetition. We used Python 3 (source codes and detailed analyses are available as supplementary material) to implement DQN variants with the discount factor ($\gamma$) of 0.99, learning rate of 0.0001, learning method of RMSprop, maximum number of training episode of 4,000, replay memory size of 1M, memory update of every 4 steps, minibatch size of 32, initial epsilon of 1.0 that decays to 0.1 over 100K steps after the exploration step (first 50K), Huber as the loss function, and maximum episode length of 18K frames. The number of steps between target network updates was 10,000 when double targeting strategy was used. In terms of hardware, we used a GPU cluster, each node equipped with two NVIDIA GPU Volta V100 and two skylake Intel Xeon 6132 processor, and over 300GB of RAM. We allocated 4 cores, one GPU, and 30GB of RAM to each run. We run all tests for 4000 episodes unless explicitly specified otherwise.

\subsection{Evaluation Metrics}
We measured the \textit{performance} of each method for each environment by averaging the total reward the agent received in the last 100 episodes during the training. The use of this measure allows evaluating the number of times a method would need to play the game to learn it. We used three evaluation matrices for comparisons: \textit{average rank}, \textit{average improvement percentage}, and \textit{percentage of improved games}. The \textit{average rank} provides the ranking of different methods according to the performance measure. To calculate the average rank for each environment, we ranked the results of different settings (including the original algorithm) by sorting their performances descendingly and then averaged the ranks across all environments. This gives us an indication of how would each setting perform in comparison with others across all environments. The \textit{average improvement percentage} measures the average percentage of improvement over the original algorithm across all environments on the performance measure. We calculated the average percentage of performance improvement, where each strategy with different settings was compared against the original version (the improvement is $100(P^o-P^s)/P^o$ where $P^o$ is the performance of the original version of the algorithm and $P^s$ is the performance of the algorithm with SP or RB strategies incorporated in it). 

\subsection{Sparsity length}
\label{sec:sparsitylength}
It was discussed in section \ref{sec:reward_backfill} that the sparsity length in environments has a direct impact on the efficiency of RB strategy. Figure \ref{fig:lambda_converge} shows the mean sparsity length of 30 Atari games when D3QN was applied to find the optimal policy. It is seen that this length is larger than 25 in most of the environments tested (23 over 30 tested), which was used in our experiments. In the cases where the sparsity length is small there would be a need for a smaller $\lambda$ to compensate. 

\begin{figure}
	\includegraphics[width=0.35\textwidth]{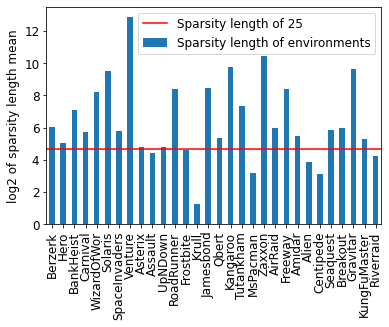}
	\centering	
	\caption{Sparsity length for 30 Atari games when D3QN was used. 23 out of 30 environments had a sparsity length of 25 or larger.}
	\label{fig:lambda_converge}
\end{figure}

One should note that the RB approach is not expensive (done in $ O(1) $) as it backpropagates reward signals and does not need any forward calculation of a neural network. 

\subsection{Parameters Setting}
We incorporated SP and RB strategies to D2QN and tested the performance of the algorithm for $ p \in \{1, 10, 50, 100, 200\} $, $\lambda \in \{.15, .65, .75, .85, .95\}$  and environments UpDown, Carnival, Gravitar, MsPacman, Qbert, Spaceinvaders, Berzerk, and Breakout. Results for SP settings are reported in Figure \ref{fig:SP_setting}. Figure \ref{fig:SP_setting} (a) shows the average ranks (the smaller, the better) of D2QN with and without SP and different values for $ p $. The D2QN without SP in most episodes ranked worse than D2QN with SP ("original" bar in the graph), demonstrating the effectiveness of this strategy. From sub-figure (a), it is clear that $ p=1 $ and $ p=10 $ provide the best average rank across all tested environments. Sub-figure (b) indicates that the average improvement is maximized across tested environments for $ p=1 $.
\begin{figure}[t]
	\begin{tabular}{c}		
		\includegraphics[width=0.45\textwidth]{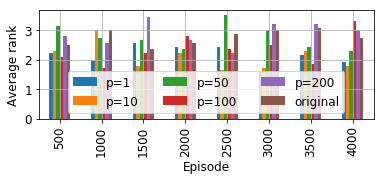}\\(a)\\\includegraphics[width=0.45\textwidth]{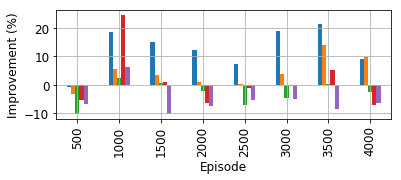}\\(b)
	\end{tabular}
	\centering	
	\caption{Positive punishment test on D2QN over 4000 episodes. (a) Average rank of different $ p $ values, (b) average improvement percentage across all 8 environments.}
	\label{fig:SP_setting}
\end{figure}

Figure~\ref{fig:RB_setting} shows the results of incorporating RB to D2QN tested on the same set of environments. The sub-figures (a) and (b) show that, for some values of $ \lambda $, RB improves D2QN (in terms of average rank and improvement percentage). The parameter $\lambda$ decides the amount of the reward propagated backwards, with a larger value of $\lambda$ leads to a larger portion of the reward back propagated. In these sub-figures it is seen that the maximum improvement and best rank takes place when $\lambda=.65$ across all tested settings. 
\begin{figure}[h]
	\begin{tabular}{c}		
		\includegraphics[width=0.45\textwidth]{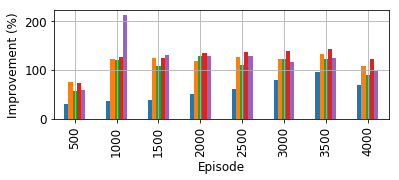}\\(a)\\\includegraphics[width=0.45\textwidth]{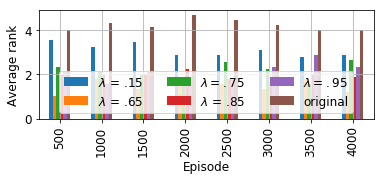}\\(b)\\ 
	\end{tabular}
	\centering	
	\caption{Reward backfill test on D2QN for different values of $\lambda$. (a) and (b) are similar to what was described in Fig. \ref{fig:SP_setting}.}
	\label{fig:RB_setting}
\end{figure}

\subsection{Comparison results}
Three different deep Q-learning algorithms were tested as shown in Figure~\ref{fig:d2qn_all}(a) and (b) when SP and RB were incorporated to the base methods ($ p=1 $ and $\lambda=0.65$). It is seen that the proposed strategies could improve the DQN methods in over 26 out of 30 tested environments by up to $ 2,800\% $ in some cases. The improvement for D2QN and D3QN was in 19 out of 30 games for up to $ 1,900\% $ and $ 700\% $, respectively. In some cases, however, the proposed strategies lead to a performance drop up to $ 50\% $ for all methods. These results well demonstrate the effectiveness of our proposed strategies. 

In addition, Figure~\ref{fig:d2qn_all}(b) shows performance comparison between our proposed strategies, Lyapanov reward shaping strategy \cite{dong2020principled}, K-nn based trajectory selection shaping strategy \cite{zhu2020meta}, and a noisy neural network structure used in Q-learning \cite{fortunato2017noisy}. The figure shows that the proposed reward shaping strategies could improve D2Qn more effectively in comparison to other shaping strategies tested. Noisy structure improved the performance of D2QN in 27 out of 30 Atari games comparing to 19 out of 30 when the proposed strategies were used. This shows that the Noisy D2QN could provide a better baseline to benefit from our proposed shaping strategies, which is left as a future work.

\begin{figure*}[t]
	\begin{tabular}{c}
		\includegraphics[width=0.8\textwidth]{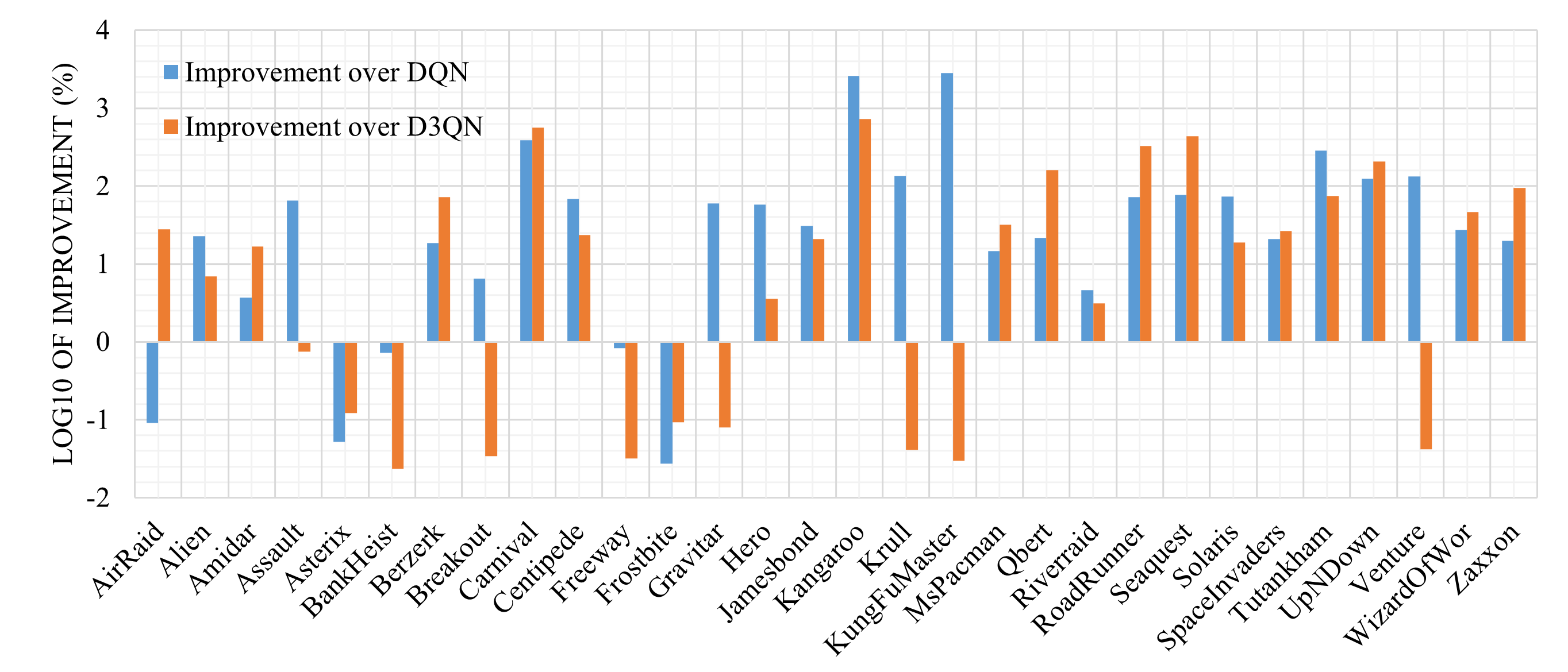}\\
		(a)\\
		\includegraphics[width=0.8\textwidth]{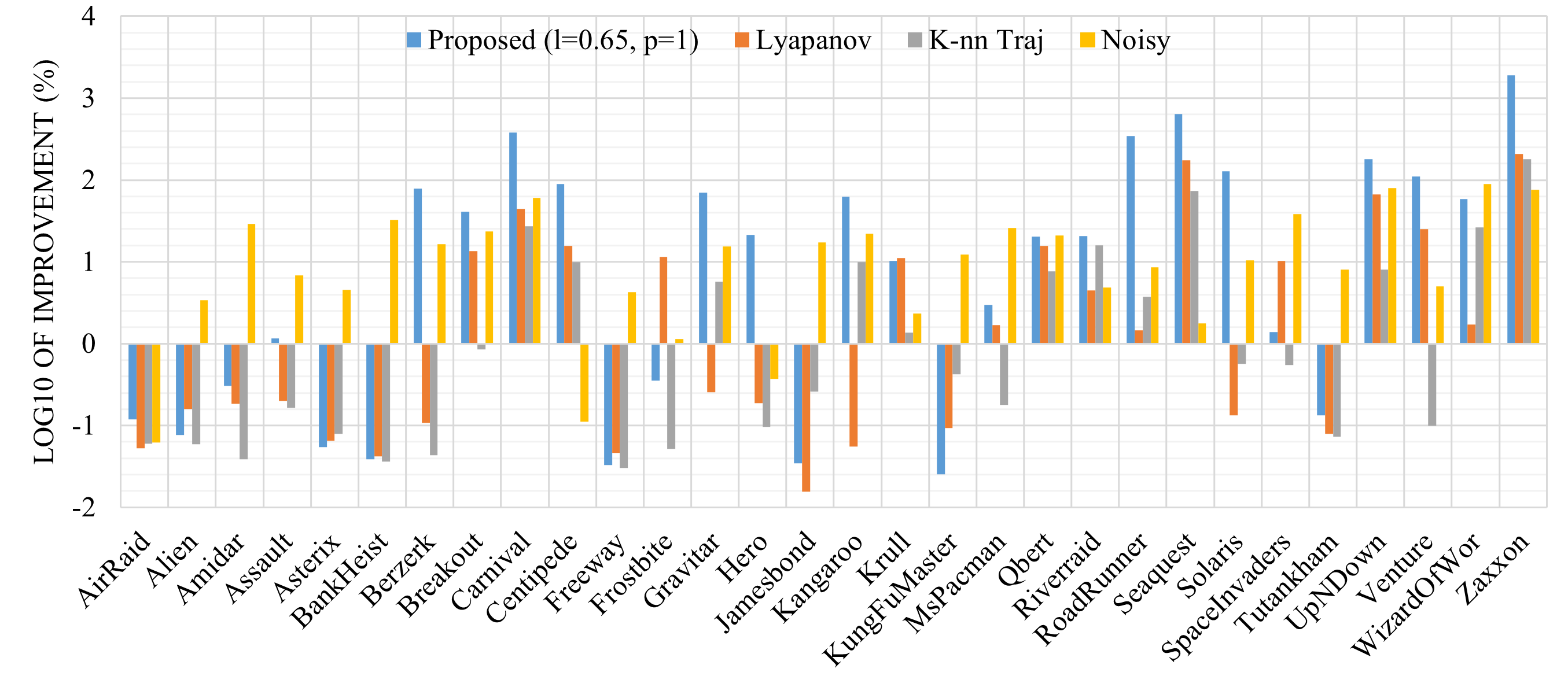}\\
		(b)\\
	\end{tabular}	
	\centering	
	\caption{Performance improvement over original (a) DQN and D3QN, (b) D2QN. (b) also indicates the performance improvement over D2QN for the Lyapanov shaping method \cite{dong2020principled}, K-nn trajectory shaping \cite{zhu2020meta}, and Noisy DQN \cite{fortunato2017noisy}. Proposed method used SP with $ p=1 $ and RB with $\lambda=0.65$.}
	\label{fig:d2qn_all}
\end{figure*}

\section{Discussion and Future Works}
\label{sec:conclusion}

In this paper, we introduced two strategies, called the self punishment and the reward backfill, to deal with the credit assignment problem. The strategies were inspired by the psychology of conditioning and behaviour shaping, namely the operant conditioning and clicker training. Self punishment signals the agent mistakes to enable quicker learning to avoid such mistakes. The reward backfill propagates the value of a rewarded state back to the previous actions to signal the contribution of those previous actions to the action which explicitly rewarded. We proved both of these two strategies maintain the order of policies in the space of all possible policies in terms of their total reward, and, by extension, maintain the optimal policy. Hence, our proposed strategies integrate with any reinforcement learning algorithm that learns a value or action-value function through experience. We further showed how to effectively implement these strategies to achieve a $ O(1) $ complexity in their calculations. Finally, we experimentally showed that these strategies can improve the ability of Deep Q-learning methods in 30 Atari 2600 games. 

Both self punishment and reward backfill methods should be viewed as a supplementary strategy to the family of deep Q-learning algorithms, which are designed for overcoming the credit assignment problem. Because these strategies do not modify the internal computational mechanism of a deep Q-learning algorithm, they are also applicable in conjunction with other algorithms developed for overcoming sparse reward problem, such as Prioritized Experience Replay~\cite{schaul2015prioritized} or Curiosity-driven Exploration~\cite{pathak2017curiosity} mentioned in Section \ref{sec:background}. 

\textbf{For the self punishment strategy}, we realized that the improvement is not consistent across all environments, i.e., some environments are improved and some not. One potential reason is that some environments have more complicated terminal states where an agent reaching the terminal states is not directly caused by the latest actions it takes but a longer term strategy. In MsPacman, for example, an agent walks into a state where the enemies come from both sides, where the agent is going to reach the terminal state regardless of the action it takes. Another potential reason, also observable in our results, is that the optimal value for $ p $ can be different across environments, which encourages an adaptive strategy to control the value of $ p $ automatically for different iterations and environments. Finally, we observed that a larger $ p $ would not improve the results. One potential reason behind this event is that larger $ p $ potentially generates larger gradient value, which may lead to delay in convergence or even divergence. Hence, based on our experiments, one simple criterion to make this strategy beneficial is to avoid using large $ p $ values, which again could be adjusted given the environment.

\textbf{For the reward backfill strategy}, we observed that the best $\lambda$ is sensitive to the sparsity length in the environments, as it was predicted by Theorem \ref{thr:sparsitylength}. For example, in the game MsPackman, the agent receives rewards very frequently. Hence, according to the Theorem \ref{thr:sparsitylength}, there is no guarantee that the incorporation of RB would not change the best solution of the RL problem. We actually observed that $\lambda=0.95, 0.85$ leads to the worst results in MsPackman, indicating the negative impact the RB strategy may have on the learning when the sparsity length is short. Note, however, an environment with frequent rewards would not have the problem of sparse rewards, dimming the need for the RB strategy. We also noticed that the best value for $ \lambda $ is environment dependent. Hence, one criterion to successfully use this approach is to ensure the sparsity length is long enough or reduce the value of $ \lambda $ (or $ l_{min} $) to ensure consistency in the order of the optimality of policies.

The impact of the proposed methods appears to be environment dependent, evidenced by our results. Thus, a natural extension to our work is to design an adaptive approach to adjust relevant parameters of the methods during the learning procedure in a way that the addition of SP or RB is more likely to be beneficial. One example is to leverage the fact that the impact of RB is directly related to the value of $ \lambda $ and sparsity of the rewards provided by the environment.

One interesting result is that there is a huge improvement (about 10 times more than others) on the original DQN algorithm. A known issue with DQN is the overestimation of Q-value due to the greedy policy of approximation, and D2QN and D3QN subsequently mitigate the problem by applying additional weights and estimators. Although the reason behind such big improvement in DQN is unclear, it would be worthwhile to investigate whether the PP and RB strategies can help the overestimation issue in DQN.

\subsubsection*{Acknowledgments}

The authors would like to acknowledge the use of GPU cluster, Wiener, at Queensland Brain Institute, the University of Queensland, for running the experiments. They would also like to thank Jake Carroll who assisted in optimizing the codes to make the best use of that cluster.

\medskip

\small

\bibliographystyle{IEEEtran}
\bibliography{reference}

\begin{thebibliography}{10}
\providecommand{\url}[1]{#1}
\csname url@samestyle\endcsname
\providecommand{\newblock}{\relax}
\providecommand{\bibinfo}[2]{#2}
\providecommand{\BIBentrySTDinterwordspacing}{\spaceskip=0pt\relax}
\providecommand{\BIBentryALTinterwordstretchfactor}{4}
\providecommand{\BIBentryALTinterwordspacing}{\spaceskip=\fontdimen2\font plus
\BIBentryALTinterwordstretchfactor\fontdimen3\font minus
  \fontdimen4\font\relax}
\providecommand{\BIBforeignlanguage}[2]{{%
\expandafter\ifx\csname l@#1\endcsname\relax
\typeout{** WARNING: IEEEtran.bst: No hyphenation pattern has been}%
\typeout{** loaded for the language `#1'. Using the pattern for}%
\typeout{** the default language instead.}%
\else
\language=\csname l@#1\endcsname
\fi
#2}}
\providecommand{\BIBdecl}{\relax}
\BIBdecl

\bibitem{sutton1988learning}
R.~S. Sutton, ``Learning to predict by the methods of temporal differences,''
  \emph{Machine learning}, vol.~3, no.~1, pp. 9--44, 1988.

\bibitem{watkins1992q}
C.~J. Watkins and P.~Dayan, ``Q-learning,'' \emph{Machine learning}, vol.~8,
  no. 3-4, pp. 279--292, 1992.

\bibitem{mnih2015human}
V.~Mnih, K.~Kavukcuoglu, D.~Silver, A.~A. Rusu, J.~Veness, M.~G. Bellemare,
  A.~Graves, M.~Riedmiller, A.~K. Fidjeland, G.~Ostrovski \emph{et~al.},
  ``Human-level control through deep reinforcement learning,'' \emph{Nature},
  vol. 518, no. 7540, p. 529, 2015.

\bibitem{vinyals2017starcraft}
O.~Vinyals, T.~Ewalds, S.~Bartunov, P.~Georgiev, A.~S. Vezhnevets, M.~Yeo,
  A.~Makhzani, H.~K{\"u}ttler, J.~Agapiou, J.~Schrittwieser \emph{et~al.},
  ``Starcraft ii: A new challenge for reinforcement learning,'' \emph{arXiv
  preprint arXiv:1708.04782}, 2017.

\bibitem{sutton2018reinforcement}
R.~S. Sutton and A.~G. Barto, \emph{Reinforcement learning: An
  introduction}.\hskip 1em plus 0.5em minus 0.4em\relax MIT press, 2018.

\bibitem{daley2019reconciling}
B.~Daley and C.~Amato, ``Reconciling $\lambda$-returns with experience
  replay,'' in \emph{Advances in Neural Information Processing Systems}, 2019,
  pp. 1131--1140.

\bibitem{skinner1951teach}
B.~F. Skinner, \emph{How to teach animals}.\hskip 1em plus 0.5em minus
  0.4em\relax Freeman, 1951.

\bibitem{pryor1999don}
K.~Pryor, \emph{Don't shoot the dog!: the new art of teaching and
  training}.\hskip 1em plus 0.5em minus 0.4em\relax Bantam, 1999.

\bibitem{peng1994incremental}
J.~Peng and R.~J. Williams, ``Incremental multi-step q-learning,'' in
  \emph{Machine Learning Proceedings 1994}.\hskip 1em plus 0.5em minus
  0.4em\relax Elsevier, 1994, pp. 226--232.

\bibitem{van2016deep}
H.~Van~Hasselt, A.~Guez, and D.~Silver, ``Deep reinforcement learning with
  double q-learning,'' in \emph{Thirtieth AAAI Conference on Artificial
  Intelligence}, 2016.

\bibitem{wang2015dueling}
Z.~Wang, T.~Schaul, M.~Hessel, H.~Hasselt, M.~Lanctot, and N.~Freitas,
  ``Dueling network architectures for deep reinforcement learning,'' in
  \emph{International conference on machine learning}.\hskip 1em plus 0.5em
  minus 0.4em\relax PMLR, 2016, pp. 1995--2003.

\bibitem{fortunato2017noisy}
M.~Fortunato, M.~G. Azar, B.~Piot, J.~Menick, M.~Hessel, I.~Osband, A.~Graves,
  V.~Mnih, R.~Munos, D.~Hassabis, O.~Pietquin, C.~Blundell, and S.~Legg,
  ``Noisy networks for exploration,'' in \emph{International Conference on
  Learning Representations}, 2018.

\bibitem{barto1983neuronlike}
A.~G. Barto, R.~S. Sutton, and C.~W. Anderson, ``Neuronlike adaptive elements
  that can solve difficult learning control problems,'' \emph{IEEE transactions
  on systems, man, and cybernetics}, no.~5, pp. 834--846, 1983.

\bibitem{schaul2015prioritized}
T.~Schaul, J.~Quan, I.~Antonoglou, and D.~Silver, ``Prioritized experience
  replay,'' \emph{International Conference on Learning Representations}, 2016.

\bibitem{pathak2017curiosity}
D.~Pathak, P.~Agrawal, A.~A. Efros, and T.~Darrell, ``Curiosity-driven
  exploration by self-supervised prediction,'' in \emph{Proceedings of the IEEE
  Conference on Computer Vision and Pattern Recognition Workshops}, 2017, pp.
  16--17.

\bibitem{vevcerik2017leveraging}
M.~Ve{\v{c}}er{\'\i}k, T.~Hester, J.~Scholz, F.~Wang, O.~Pietquin, B.~Piot,
  N.~Heess, T.~Roth{\"o}rl, T.~Lampe, and M.~Riedmiller, ``Leveraging
  demonstrations for deep reinforcement learning on robotics problems with
  sparse rewards,'' \emph{arXiv preprint arXiv:1707.08817}, 2017.

\bibitem{nair2018overcoming}
A.~Nair, B.~McGrew, M.~Andrychowicz, W.~Zaremba, and P.~Abbeel, ``Overcoming
  exploration in reinforcement learning with demonstrations,'' in \emph{2018
  IEEE International Conference on Robotics and Automation (ICRA)}.\hskip 1em
  plus 0.5em minus 0.4em\relax IEEE, 2018, pp. 6292--6299.

\bibitem{hester2018deep}
T.~Hester, M.~Vecerik, O.~Pietquin, M.~Lanctot, T.~Schaul, B.~Piot, D.~Horgan,
  J.~Quan, A.~Sendonaris, I.~Osband \emph{et~al.}, ``Deep q-learning from
  demonstrations,'' in \emph{Thirty-Second AAAI Conference on Artificial
  Intelligence}, 2018.

\bibitem{ng1999policy}
A.~Y. Ng, D.~Harada, and S.~Russell, ``Policy invariance under reward
  transformations: Theory and application to reward shaping,'' in \emph{ICML},
  vol.~99, 1999, pp. 278--287.

\bibitem{ng2003shaping}
A.~Y. Ng and M.~I. Jordan, ``Shaping and policy search in reinforcement
  learning,'' Ph.D. dissertation, University of California, Berkeley Berkeley,
  2003.

\bibitem{gu2017deep}
S.~Gu, E.~Holly, T.~Lillicrap, and S.~Levine, ``Deep reinforcement learning for
  robotic manipulation with asynchronous off-policy updates,'' in \emph{2017
  IEEE international conference on robotics and automation (ICRA)}.\hskip 1em
  plus 0.5em minus 0.4em\relax IEEE, 2017, pp. 3389--3396.

\bibitem{dong2020principled}
Y.~Dong, X.~Tang, and Y.~Yuan, ``Principled reward shaping for reinforcement
  learning via lyapunov stability theory,'' \emph{Neurocomputing}, 2020.

\bibitem{zhu2020meta}
X.~Zhu and T.~Sugawara, ``Meta-reward model based on trajectory data with
  k-nearest neighbors method,'' in \emph{2020 International Joint Conference on
  Neural Networks (IJCNN)}.\hskip 1em plus 0.5em minus 0.4em\relax IEEE, 2020,
  pp. 1--8.

\bibitem{marom2018belief}
O.~Marom and B.~Rosman, ``Belief reward shaping in reinforcement learning,'' in
  \emph{Proceedings of the AAAI Conference on Artificial Intelligence},
  vol.~32, no.~1, 2018.

\bibitem{zou2019reward}
H.~Zou, T.~Ren, D.~Yan, H.~Su, and J.~Zhu, ``Reward shaping via
  meta-learning,'' \emph{arXiv preprint arXiv:1901.09330}, 2019.

\bibitem{gimelfarb2018reinforcement}
M.~Gimelfarb, S.~Sanner, and C.-G. Lee, ``Reinforcement learning with multiple
  experts: A bayesian model combination approach,'' in \emph{Proceedings of the
  32nd International Conference on Neural Information Processing Systems},
  2018, pp. 9549--9559.

\bibitem{cai2019neural}
Q.~Cai, Z.~Yang, J.~D. Lee, and Z.~Wang, ``Neural temporal-difference learning
  converges to global optima,'' in \emph{Advances in Neural Information
  Processing Systems}, vol.~32.\hskip 1em plus 0.5em minus 0.4em\relax Curran
  Associates, Inc., 2019, pp. 11\,315--11\,326.

\end{thebibliography}
\end{document}